\newtheorem{thm}{Theorem}
\newtheorem{prop}[thm]{Proposition}
\newtheorem{example}{Example}
\newcommand{\G}{\mathcal{G}}
\title{Integer Programming for Causal Structure Learning\\ in the Presence of Latent Variables}
\author{Rui Chen \\ University of Wisconsin-Madison \\[-.2cm]\small (rchen234@wisc.edu) \and 
	Sanjeeb Dash\\ IBM Research \\[-.2cm]\small (sanjeebd@us.ibm.com) \and
	Tian Gao \\ IBM Research \\[-.2cm] \small (tgao@us.ibm.com)}
\begin{document}
	\maketitle
	\begin{abstract}
		The problem of finding an ancestral acyclic directed mixed graph (ADMG) that represents the   causal relationships between a set of variables is an important area of research on causal inference. Most existing score-based structure learning methods focus on learning directed acyclic graph (DAG) models without latent variables. A number of  score-based methods have recently been proposed for the ADMG learning, yet they are heuristic in nature and do not guarantee an optimal solution. We propose a novel exact score-based method that solves an integer programming (IP) formulation and returns a score-maximizing ancestral ADMG for a set of continuous variables that follow a multivariate Gaussian distribution. We generalize the state-of-the-art IP model for DAG learning problems and derive new classes of valid inequalities to formulate an IP model for ADMG learning. Empirically, our model can be solved efficiently for medium-sized problems and achieves better accuracy than state-of-the-art score-based methods as well as benchmark constraint-based methods. 
	\end{abstract}
	
	\section{Introduction}
	Causal graphs are graphical models representing dependencies and causal relationships between a set of variables. One class of the most common causal graphs, often known as Bayesian networks (BNs), is modeled by directed acyclic graphs (DAGs) in which a direct causal relationship between two nodes is indicated by a directed edge.  However, its structure learning problem is NP-hard~\cite{chickering2004large}. Many exact and approximate algorithms for learning BN structures from data have been developed, including score-based and constraint-based approaches. Score-based methods use a score -- such as Bayesian scores or Bayesian information criterion (BIC) -- to measure the goodness of fit of different graphs over the data, and then use a search procedure -- such as hill-climbing \cite{heckerman1995learning,MMPCcor,HCimap}, forward-backward search~\cite{chickering02}, dynamic programming \cite{smDP,silander06,gao2018parallel}, A* \cite{Yuan13learning} or integer programming \cite{jaakkola2010learning,cussens11,cussens2016polyhedral} -- in order to find the best graph. On the other hand, constraint-based structure learning algorithms use  (conditional) independence tests to decide on the existence of edges between all pairs of variables.  Popular algorithms include the SGS algorithm \cite{spirtes2000causation},  PC algorithm \cite{spirtes2000causation}, and IC algorithm \cite{pearl2000causality}. 
	
	%Recently, there have also been  a suite of hybrid algorithms combining the score-based and constraint-based methods, such as MMHC algorithm \cite{Tsamardinos03},  and applying constraint-based methods to multiple environments \cite{mooij2016joint}.
	
	Despite their wide application \cite{pearl2000causality}, it is known that DAG models are not closed under marginalization \cite{tsirlis2018scoring}. This implies that DAGs cannot be used to model structures involving latent variables which are the most common cases in practice.
	For example, in healthcare domains, there may be numerous unobserved factors such as gene expression. Self-reported family history and diets may also leave out some important information. Ancestral graphs (AGs) were proposed as a generalization of DAGs \cite{richardson2002ancestral}. AG models include all DAG models and are closed under marginalization and conditioning. AGs capture the independence relations among observed variables without explicitly including latent variables in the model. In this work, we assume no selection biases (no undirected edges in AGs). Hence there are two types of edges in the AGs: {\em directed} edges represent direct or ancestral causal relationships between variables and {\em bidirected} edges represent latent confounded relationships between variables. A bidirected edge between two nodes means that there exists at least one latent confounder that causes both nodes, and neither node causes the other node. AGs considered in this work  are also called ancestral acyclic directed mixed graphs (ADMGs) \cite{bhattacharya2020differentiable} in the  literature.
	
	%In this paper, we focus on the intersection of these two classes of graphs, which is known as directed ancestral graphs . Directed ancestral graphs can be considered as ancestral ADMGs, or AGs with only directed and bidirected edges (while AGs can have undirected edges in general). In a directed ancestral graph, a bidirected edges between two nodes means that there exists at least one latent confounder that causes both nodes, and neither node causes the other node.
	
	Causal structure learning algorithms for ancestral ADMGs can also be divided into two main classes: constraint-based  and score-based methods. Constraint-based methods apply conditional independence tests on the data to infer  graph structures. Score-based methods search through possible graph structures to optimize a criterion for model selection. There are also hybrid methods that use both conditional independent tests and some scoring criterion that measures the likelihood of the data. In the setting of learning DAGs from observational data, score-based methods often achieve better performance than constraint-based methods as score-based methods are less sensitive to error propagation \cite{spirtes2010introduction}. For ADMG learning, existing methods are mostly constraint-based, including the FCI algorithm \cite{spirtes2000causation,zhang2008completeness} and the conservative FCI (cFCI) \cite{ramsey2012adjacency} to name a couple. Several score-based or hybrid approaches have been proposed for ancestral ADMG structure learning over continuous Gaussian variables in recent years \cite{triantafillou2016score,tsirlis2018scoring,bernstein2020ordering,chobtham2020bayesian} but are all greedy or local search algorithms. In this paper, we close the gap and propose an exact score-based solution method based on an integer programming (IP) formulation for ancestral ADMG learning. Our method is inspired by existing DAG learning IP formulations, and we derive new classes of valid inequalities to restrict the learned graph to be a valid ADMG. Empirical evaluation shows that the proposed method outperforms existing score-based DAG and ADMG  structure learning algorithms. 
	
	The paper is organized as follows. In Section \ref{sec:ADMG_learning}, we define the ancestral ADMG learning problem. In Section \ref{sec:IP_formulation}, we propose an integer programming formulation for obtaining the score-maximizing ancestral ADMG. In Section \ref{sec:numerical}, we present experiments to compare our method with existing baselines.

	\section{Ancestral ADMG Learning}\label{sec:ADMG_learning}
	
	\subsection{Preliminaries}
	We briefly review some related concepts. DAGs are directed graphs without directed cycles.
	A directed mixed graph $\G = (V,E_d, E_b)$ consists of a set of nodes $V$, a set of directed edges ($\rightarrow$) $E_d \subseteq \{(i,j) : i, j \in V, i \neq j\}$, and bidirected edges ($\leftrightarrow$) $E_b \subseteq \{\{i,j\} : i, j \in V, i \neq j\}$ between certain pairs of nodes. Given a directed edge $a \rightarrow b$,  $b$ is the head node, and $a$ the tail node. We call node $a$ an ancestor of node $b$ in $\G$ if there is a directed path from $a$ to $b$ in $\G$ or $a=b$. We call node $a$ a spouse (resp., parent) of node $b$ in $\G$ if there is a bidirected edge between $a$ and $b$ (resp., a directed edge from $a$ to $b$) in $\G$. We denote the set of ancestors, the set of parents, and the set of spouses of node $a$ in $\G$ by an$_\G(a)$, pa$_\G(a)$, and sp$_\G(a)$, respectively. For $i\in V$ and $W\subseteq V$, $W\rightarrow i$ denotes that $W$ is the parent set of $i$. A directed mixed graph $\G$ is called an {\em ancestral ADMG} if the following condition holds for all pairs of nodes $a$ and $b$ in $\G$:\begin{itemize}
		\item If $a\neq b$ and $b\in$an$_\G(a)\cup$sp$_\G(a)$, then $a\notin$an$_\G(b)$.
	\end{itemize}
	In other words, $\G$ is an ancestral ADMG if it contains no directed cycles ($a\rightarrow c\rightarrow\ldots\rightarrow b\rightarrow a$) or almost directed cycles. An almost directed cycle is of the form $a\rightarrow c\rightarrow\ldots\rightarrow b\leftrightarrow a$; in other words, $\{a,b\} \in E_b$ is a bidirected edge, and $a \in$ an$_\G(b)$. Given a directed mixed graph $\G$, the \textit{districts} define a set of equivalence classes of nodes in $\G$. The district for node $a$ is defined as the  connected component of $a$ in the subgraph of $\G$ induced by all bidirected edges, i.e.,\begin{displaymath}
	\{b:b\leftrightarrow\ldots\leftrightarrow a\text{ in }\G\text{ or }a=b\}.
	\end{displaymath}
	Given a district $D$ of $\G$, the directed mixed graph $\G_D$ is a subgraph of $G$ defined as follows. The node set of $\G_D$ consists of all nodes in $D$ and their parents. The bidirected edges in $\G_D$ are the bidirected edges in $\G$ that connect nodes in $D$. The directed edges in $\G_D$ are the directed edges in $\G$ where the head nodes are contained in $D$. We say that $\G_D$ is the subgraph {\em implied} by $D$. In this paper, we call subgraph $C$ of $\G$ a c-component if $C$ is a subgraph implied by some district of $\G$. Note that if a c-component has no bidirected edges, then any district must consist of a single node, and the c-component consists of a number of directed edges all with the same head node. Some authors use c-component as a synonym for district.

	\subsection{Calculating Graph Scores}\label{subsec:scoring}
	We assume the ground truth data is modeled by the following linear equations:\begin{displaymath}
	X=MX+\epsilon.
	\end{displaymath}
	Here $M$ is a $d\times d$ matrix, $X=(X_1,\ldots,X_d)^T$ are variables of the model, and $\epsilon=(\epsilon_1,\ldots,\epsilon_d)^T$ is the error vector which follows a multivariate Gaussian distribution $N(0,\Sigma)$\footnote{We focus on the multivariate Gaussian case in this work as the scoring function can be factorized, although our work may also apply to discrete cases per different factorization rules \cite{richardson2014factorization,evans2014markovian}.}. Neither $M$ nor $\Sigma$ can be observed. Our goal is to find the best-fitting ancestral ADMG $\G$ and an associated parameterization $(M,\Sigma)$ satisfying $M_{ij}=0$ if $i=j$ or $j\rightarrow i$ is not in $\G$ and $\Sigma_{ij}=0$ if $i\leftrightarrow j$ is not in $\G$. A score of the graph measures how well the graph structure represents the data. We use BIC \cite{schwarz1978estimating} scores for all graphs in this paper. The BIC score for graph $\G$ is given by\begin{displaymath}
	\textbf{BIC}_\G=2\ln(L_\G(\hat{\Sigma}))-\ln(N)(2|V|+|E|).
	\end{displaymath}
	Here $\hat{\Sigma}$ is the maximum likelihood estimate of $\Sigma$ given the graph representation $\G$ and $\ln(L_\G(\hat{\Sigma}))$ is the associated log-likelihood while $N$ denotes the number of samples, $|V|$ and $|E|$ denote the number of nodes and the number of edges in $\G$, respectively. Given a fixed graph and the empirical covariance matrix $Q$, the maximum likelihood estimate of the parameters can be found by applying the residual iterative conditional fitting algorithm in \cite{drton2009computing}. According to \cite{nowzohour2017distributional}, $\ln(L_\G(\hat{\Sigma}))$ can be decomposed by c-components in $\G$. Specifically, let $\mathcal{D}$ denote all districts of $\G$. Then\begin{multline}
	\ln(L_\G(\hat{\Sigma}))=\\
	-\frac{N}{2}\sum_{D\in\mathcal{D}}\Big[|D|\ln(2\pi)+
	\log(\frac{|\hat{\Sigma}_{\G_D}|}{\prod_{j\in\text{pa}_{\G}(D)}\hat{\sigma}^2_{Dj}})+\\
	\frac{N-1}{N}\text{tr}(\hat{\Sigma}^{-1}_{\G_D}Q_D-|\text{pa}_\G(D)\setminus D|)\Big].    
	\end{multline}
	Here pa$_{\G}(D)$ denotes the union of parent sets of nodes in $D$, $\hat{\Sigma}_{\G_D}$ is the maximum log-likelihood for subgraph $\G_D$ and $\hat{\sigma}^2_{Dj}$ denotes diagonal entry of $\hat{\Sigma}_{\G_D}$ corresponding to node $j$. Note that the districts partition the nodes of $\G$, and the edges of $\G$ are partitioned by the subgraphs $\G_D$. The BIC score for graph $\mathcal{G}$ can be expressed as a sum of local scores of its c-components. For example, in Figure \ref{fig:decomp}, the BIC score of the ancestral ADMG is equal to the sum of local scores of four c-components represented by different colors. Nodes with the same color belong to the same district and directed edges indicate their parents. When the graph is a DAG, other decomposable scoring functions can also be used \cite{silander2008factorized}.
	\begin{figure}[bt!]
		\centering
		\begin{tikzpicture}
		\filldraw [red] (0,0) circle (2pt);
		\filldraw [red] (0,2) circle (2pt);
		\filldraw [blue] (2,0) circle (2pt);
		\filldraw [red] (2,2) circle (2pt);
		\filldraw [yellow] (-1,1) circle (2pt);
		\filldraw [blue] (3,1) circle (2pt);
		\filldraw [green] (4,1) circle (2pt);
		\draw[->,red] (-0.9,1.1) -- (-0.1,1.9);
		\draw[<->,red] (0,0.1) -- (0,1.9);
		\draw[<->,red] (0.1,2) -- (1.9,2);
		\draw[<-,red] (0.1,0) -- (1.9,0);
		\draw[<->,blue] (2.1,0.1) -- (2.9,0.9);
		\draw[->,blue] (2.1,1.9) -- (2.9,1.1);
		\draw[->,green] (3.1,1) -- (3.9,1);
		\draw[->,green] (2.1,2) -- (3.9,1.1);
		\draw[->,green] (2.1,0) -- (3.9,0.9);
		\end{tikzpicture}
		\caption{Decomposition of the BIC score for an ancestral ADMG.}
		\label{fig:decomp}
	\end{figure}
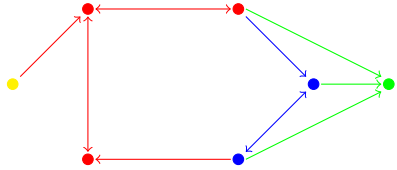
	
	Given the above decomposition property of BIC scores, one can immediately formulate the ancestral ADMG learning problem as an optimization problem, where one takes as input the set $\mathcal{C}$ of all possible candidate c-components defined on the input random variables, and considers each subset of $\mathcal{C}$ where the union of edges in the c-components in the subset form an ancestral ADMG.
	This is however impractical as the size of $\mathcal{C}$  grows exponentially with increasing number of nodes/random variables. In a similar fashion, the DAG learning problem can be solved by considering candidate c-components where each c-components is implied by a single node district. Even with this restriction, the number of c-components that need to be considered in DAG learning is exponential, and a common approach to deal with this issue is to only consider c-components with a bounded number of nodes.
	The model we propose for ADMG learning is defined over general c-components, although in our computational experiments we restrict the set of candidate c-components to allow our model to be solved within a reasonable amount of time. In Section \ref{subsec:prune}, we further consider eliminating sub-optimal c-components, which significantly reduces the size of the search space in some cases.
	\subsection{Pruning the list of candidate c-components}\label{subsec:prune}
	The set of ancestral ADMGs defined over a set of nodes is closed under deletion of directed and bidirected edges.
	Similarly, the operation of removing all parents of a node and replacing all bidirected edges incident to the node with outgoing directed edges transforms an ancestral ADMG into another ancestral ADMG.
	Both of these operations also result in the transformation of the c-components associated with an ADMG.
	If the c-components resulting from applying the above transformation to another c-component have a higher combined score than the original c-component, then we can assert that the original c-component need never be considered in an optimal solution of the ADMG learning problem.
	We use the above ideas and their hybrids to define two ways of pruning $\mathcal{C}$ in our implementation. Firstly, if removing one or more edges from a c-component results in a set of c-components with a higher combined score, we prune the original c-component. For example, %let $A, B, C, D$ be nodes. % and $\{A\}$ indicates the parent set of $B$. 
	the c-component $\{A\}\rightarrow B\leftrightarrow C\leftarrow \{D\}$ can be pruned if it has a lower score than the sum of the scores of $B\leftarrow \{A\}$ and $C\leftarrow \{D\}$. Similarly, we prune a c-component if the second operation above leads to c-components with a higher combined score. For example, we can prune  $\{A\}\rightarrow B\leftrightarrow C\leftarrow \{D\}$ if it has a score lower than the sum of the scores of $B\leftarrow\emptyset$ and $C\leftarrow\{B,D\}$. A hybrid of the previous two pruning ideas is also implemented. For example, we can combine both operations to transform $\{A\}\rightarrow B\leftrightarrow C\leftarrow \{D\}$ into the c-components $B\leftarrow\emptyset$ and $C\leftarrow\{B\}$.
	
	\section{An Integer Programming Formulation}\label{sec:IP_formulation}
	Integer programming (IP) is a mathematical optimization tool for modeling and solving optimization problems involving variables that are restricted to discrete values. For any c-component $C$ implied by a district, let $D_C$ denote the implying district, let $E_{C}$ denote the pairs of nodes in $D_C$ that are connected by a bidirected edge, and for each node $i \in D_C$, let $W_{C,i}$ denote the parent set of node $i$ in c-component $C$. Given a set of c-components $\mathcal{C}$, finding a score-maximizing ancestral ADMG with all its district-implied subgraphs in $\mathcal{C}$ can be straightforwardly formulated as an integer program as follows:\begin{eqnarray}
	\max_{z\in\{0,1\}^\mathcal{C}}\  &\sum_{C\in\mathcal{C}}s_{C}z_{C}\label{obj}\\
	\text{s.t. } &\sum_{C:i\in D_C}z_{C}=1, i\in \{1,\ldots,d\}\label{sumTo1}\\
	&\G(z)\text{ is acyclic and ancestral.}\label{nocycon}
	\end{eqnarray}
	Here $z_C$ is a binary variable indicating if the c-component $C$ is chosen, $s_C$ is the local score of c-component $C$ and $\G(z)$ is the directed mixed graph whose district-implied subgraphs are exactly the c-components $C$ with $z_C=1$. Constraints (\ref{sumTo1}) enforce the condition each node $i$ is contained in a single district (that implies a chosen c-component). Constraint (\ref{nocycon}) implies that the resulting directed mixed graph $\G(z)$ contains no directed or almost directed cycles. Standard IP solvers can only deal with linear inequality constraints, and we next discuss how to represent constraint (\ref{nocycon}) by a set of linear inequality constraints. 
	\subsection{Avoiding Directed Cycles}
	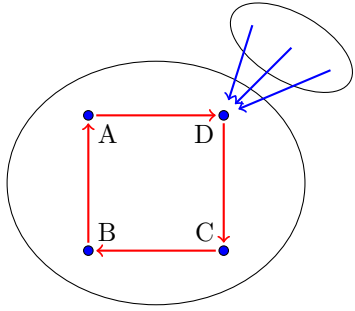
\begin{figure}[bt!]
		\centering
		\begin{tikzpicture}[scale=0.9]
		\draw [fill=blue] (0,0) circle (2pt) node[below right] {A};
		\draw [fill=blue] (0,-2) circle (2pt) node[above right] {B};
		\draw [fill=blue] (2,-2) circle (2pt) node[above left] {C};
		\draw [fill=blue] (2,0) circle (2pt) node[below left] {D};
		\draw (1,-1) ellipse (2.2cm and 1.8cm);
		\draw[rotate around={-30:(3,1)}] (3,1) ellipse (1cm and 0.5cm);
		\draw[red,->,thick,shorten <= 3pt, shorten >= 3pt] (0,0) -- (2,0);
		\draw[red,->,thick,shorten <= 3pt, shorten >= 3pt] (2,0) -- (2,-2);
		\draw[red,->,thick,shorten <= 3pt, shorten >= 3pt] (2,-2) -- (0,-2);
		\draw[red,->,thick,shorten <= 3pt, shorten >= 3pt] (0,-2) -- (0,0);
		\draw[blue,->,thick,shorten >= 6pt] (2.423,1.333) -- (2,0);
		\draw[blue,->,thick,shorten >= 6pt] (3,1) -- (2,0);
		\draw[blue,->,thick,shorten >= 6pt] (3.577,0.667) -- (2,0);
		\end{tikzpicture}
		\caption{Depiction of the cluster inequality for $S=\{A, B, C, D\}$.}
		\label{fig:dircyc}
	\end{figure}
	Several (mixed-)integer programming (MIP) formulations exist for constraining a graph to be acyclic. A description and a comparison of some of these different anti-cycle formulations for the DAG learning problem with continuous data, including the linear ordering formulation \cite{grotschel1985acyclic}, the topological ordering formulation \cite{park2017bayesian}, and the cycle elimination formulation, are given in \cite{manzour2020integer}. A new layered network formulation was also proposed in \cite{manzour2020integer}. One particular class of anti-cycle constraints called cluster inequalities/constraints was introduced in \cite{jaakkola2010learning} and often performs better than the other methods above and results in a very tight integer programming formulation for DAG learning problems. Let $I(\cdot)$ be a variable indicating whether or not the substructure described between the parentheses is present in the graph. Specifically, $I(W\rightarrow i)$ indicates whether or not pa$_{\G(z)}(i)=W$.
	Cluster constraints have the following form:\begin{equation}\label{ineq:cluster}
	\sum_{i\in S}\sum_{W:W\cap S=\emptyset}I(W\rightarrow i)\geq 1,~~\forall S\subseteq\{1,2,\ldots,d\}.
	\end{equation}
	Inequalities \eqref{ineq:cluster} encode the constraint that for a set $S$ of nodes in an acyclic graph, there must exist at least one node in $S$ whose parent set has no intersection with $S$. In Figure~\ref{fig:dircyc}, there is a directed cycle connecting nodes $\{A, B, C, D\}$, and it violates the cluster inequality for $S = \{A, B, C, D\}$, which would be satisfied if node $D$ has all its parents outside $S$. The generation of cluster constraints is essential for the state-of-the-art IP-based DAG learning solver GOBNILP \cite{bartlett2017integer}. The same inequalities can be applied 
	to ancestral ADMG learning by introducing auxiliary variables $I(W\rightarrow i)$ defined as\begin{equation}\label{def:aux_var}
	I(W\rightarrow i):=\sum_{C\in\mathcal{C}:i\in D_C,W_{C,i}=W}z_C.
	\end{equation}
	In our case, inequalities (\ref{ineq:cluster}) can be further strengthened as some of the original variables $z_C$ may be double counted when they are present in multiple auxiliary variables of the form $I(W\rightarrow i)$. We propose the following {\em strengthened cluster constraints} for ancestral ADMG learning:\begin{equation}\label{ineq:strengthened_cluster}
	\sum_{i\in S}\sum_{C\in \mathcal{C}:i\in D_C,W_{C,i}\cap S=\emptyset}z_C\geq 1,~~\forall S\subseteq\{1,2,\ldots,d\}.
	\end{equation}
	\begin{prop}\label{prop:ADMG}
		Inequalities \eqref{ineq:strengthened_cluster} are satisfied by all solutions of the integer program \eqref{obj}-\eqref{nocycon}. If $z\in\{0,1\}^d$ satisfies \eqref{sumTo1} and \eqref{ineq:strengthened_cluster}, then $z$ corresponds to an ADMG, i.e., $\G(z)$ contains no directed cycles.
	\end{prop}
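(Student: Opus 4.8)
The plan is to prove the two directions separately, both resting on a single bookkeeping fact that I would record first: by \eqref{sumTo1}, every node $i$ lies in the district $D_C$ of exactly one chosen c-component $C$, and for that $C$ the set $W_{C,i}$ equals $\mathrm{pa}_{\G(z)}(i)$. This is immediate once one recalls that the directed edges of a district-implied subgraph $\G_D$ are exactly the directed edges of $\G$ with head in $D$, so that all directed edges of $\G(z)$ into $i$ are precisely the directed edges of $C$ into $i$. Consequently, for any $S$ the inner sum in \eqref{ineq:strengthened_cluster} over c-components containing $i$ in their district with $W_{C,i}\cap S=\emptyset$ is $1$ if $\mathrm{pa}_{\G(z)}(i)\cap S=\emptyset$ and $0$ otherwise. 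Hence the left-hand side of \eqref{ineq:strengthened_cluster} counts exactly the nodes of $S$ whose parent set in $\G(z)$ is disjoint from $S$, and the whole proposition reduces to an elementary statement about the directed part of $\G(z)$.

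For the first claim, I would take any $z$ feasible for \eqref{obj}--\eqref{nocycon} and any nonempty $S\subseteq\{1,\dots,d\}$. Since $\G(z)$ is acyclic by \eqref{nocycon}, the subgraph of $\G(z)$ induced on $S$ (keeping only directed edges) is a finite acyclic digraph and therefore has a source, i.e., a node $i\in S$ with $\mathrm{pa}_{\G(z)}(i)\cap S=\emptyset$. By the bookkeeping fact this $i$ contributes a term equal to $1$ on the left-hand side of \eqref{ineq:strengthened_cluster}, while every other term is nonnegative, so the inequality holds.

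For the converse, suppose $z$ satisfies \eqref{sumTo1} and every inequality \eqref{ineq:strengthened_cluster}, and assume toward a contradiction that $\G(z)$ contains a directed cycle; let $S$ be the set of nodes on that cycle. Each node of $S$ has its predecessor on the cycle as a parent, so $\mathrm{pa}_{\G(z)}(i)\cap S\neq\emptyset$ for every $i\in S$. By the bookkeeping fact, every term of the left-hand side of \eqref{ineq:strengthened_cluster} for this particular $S$ is then $0$, contradicting that it must be at least $1$. Hence $\G(z)$ is acyclic, which is exactly what ``corresponds to an ADMG'' means here.

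The only point that needs a little care, and the only place I expect any friction in the write-up, is justifying that $\G(z)$ is well-defined and that its district-implied subgraphs really are the chosen c-components, using only \eqref{sumTo1}. I would argue that since each district $D_C$ is a connected component of the bidirected part of $C$ and the chosen districts $D_C$ are pairwise disjoint and cover $V$, the bidirected connected components of the union graph $\G(z)$ are exactly the $D_C$'s; and since a directed edge of $\G(z)$ with head $i$ comes from some chosen c-component whose district contains $i$, it must come from the unique such $C$. Everything else is just the source-node (resp.\ cycle-predecessor) dichotomy above, so the argument is short.
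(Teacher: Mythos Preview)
Your proposal is correct and follows essentially the same approach as the paper's proof: both directions hinge on the observation that, under \eqref{sumTo1}, the left-hand side of \eqref{ineq:strengthened_cluster} counts the nodes $i\in S$ with $\mathrm{pa}_{\G(z)}(i)\cap S=\emptyset$, after which validity follows from ``every finite DAG has a source'' and the converse from ``on a directed cycle every node has a predecessor in the cycle.'' The paper phrases the first direction as a proof by contradiction rather than invoking the source-node fact directly, and it is terser about the well-definedness of $\G(z)$, but the logical content is the same.
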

	\begin{proof}
		We first show the validity of \eqref{ineq:strengthened_cluster}. Let $\bar{z}$ be a feasible solution of the integer program \eqref{obj}-\eqref{nocycon}. Assume that inequality \eqref{ineq:strengthened_cluster} with $S$ set to $\bar{S}$ is violated by $\bar{z}$, i.e.,\begin{equation}\label{viol}
		\sum_{i\in \bar{S}}\sum_{C\in \mathcal{C}:i\in D_C,W_{C,i}\cap \bar{S}=\emptyset}\bar{z}_C=0.
		\end{equation}
		By constraint $\eqref{sumTo1}$, for each $i\in\{1,\ldots,d\}$, there exists $C_i$ satisfying $i\in D_{C_i}$ such that $\bar{z}_{C_i}=1$. By \eqref{viol}, we have $W_{C_i,i}\cap\bar{S}\neq \emptyset$ for each $i\in\bar{S}$. It follows that each node $i$ in $\bar{S}$ has a parent in $\bar{S}$ in $\G(\bar{z})$, which contradicts the feasibility of $\bar{z}$ since $\G(\bar{z})$ contains a directed cycle with nodes in $\bar{S}$.
		
		Next we show that $\G(z)$ is an ADMG if $z$ satisfies \eqref{sumTo1} and \eqref{ineq:strengthened_cluster}. Constraints \eqref{sumTo1} imply that $\G(z)$ is a valid directed mixed graph as exactly one c-component having $i$ in its implying district is active. We only need to show that $\G(z)$ contains no directed cycles. Assume for contradiction that $\G(z)$ contains a directed cycle $i_0\rightarrow i_1\rightarrow\ldots\rightarrow i_K\rightarrow i_0$. Let $S'=\{i_0,i_1,\ldots,i_K\}$. By constraint $\eqref{sumTo1}$, for each $i\in\{1,\ldots,d\}$, there exists exactly one $C_i$ satisfying $i\in D_{C_i}$ such that $z_{C_i}=1$. The directed cycle $i_0\rightarrow i_1\rightarrow\ldots\rightarrow i_K\rightarrow i_0$ implies that $W_{C_i,i}\cap S'\neq\emptyset$ for each $i\in S'$. Therefore, by constraint $\eqref{sumTo1}$, $\sum_{C\in \mathcal{C}:i\in D_C,W_{C,i}\cap S'=\emptyset}z_C=0$ for each $i\in S'$, which contradicts inequality \eqref{ineq:strengthened_cluster} with $S=S'$.
	\end{proof}
	\subsection{Avoiding Almost Directed Cycles}
	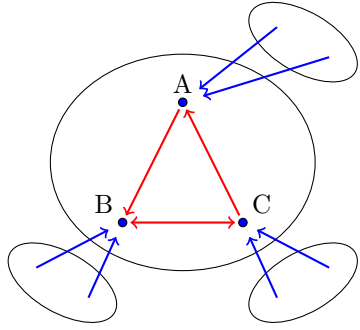
\begin{figure}[bt!]
		\centering
		\begin{tikzpicture}[scale=0.8]
		\draw [fill=blue] (1,0) circle (2pt) node[above] {A};
		\draw [fill=blue] (0,-2) circle (2pt) node[above left] {B};
		\draw [fill=blue] (2,-2) circle (2pt) node[above right] {C};
		\draw [red,<->,thick,shorten <= 3pt, shorten >= 3pt] (0,-2) -- (2,-2);
		\draw [red,->,thick,shorten <= 3pt, shorten >= 3pt] (1,0) -- (0,-2);
		\draw [red,->,thick,shorten <= 3pt, shorten >= 3pt] (2,-2) -- (1,0);
		\draw (1,-1) ellipse (2.2cm and 1.8cm);
		\draw[rotate around={-30:(3,1)}] (3,1) ellipse (1cm and 0.5cm);
		\draw[blue,->,thick,shorten >= 8pt] (2.567,1.25) -- (1,0);
		\draw[blue,->,thick,shorten >= 8pt] (3.433,0.75) -- (1,0);
		\draw[rotate around={30:(3,-3)}] (3,-3) ellipse (1cm and 0.5cm);
		\draw[rotate around={-30:(-1,-3)}] (-1,-3) ellipse (1cm and 0.5cm);
		\draw[blue,->,thick,shorten >= 6pt] (3.433,-2.75) -- (2,-2);
		\draw[blue,->,thick,shorten >= 6pt] (2.567,-3.25) -- (2,-2);
		\draw[blue,->,thick,shorten >= 6pt] (-1.433,-2.75) -- (0,-2);
		\draw[blue,->,thick,shorten >= 6pt] (-0.567,-3.25) -- (0,-2);
		\end{tikzpicture}
		\caption{Depiction of the bicluster inequality for $S=\{A, B, C\}$ and $(i,j)=(B,C)$.}
		\label{fig:almostdircyc}
	\end{figure}
	Ancestral ADMGs are ADMGs without almost directed cycles. To encode the constraint that the graph contains no almost directed cycles, we define auxiliary variables $I(W^1\rightarrow i,W^2\rightarrow j,i\leftrightarrow j)$ and $I(i\leftrightarrow j)$ in a manner similar to \eqref{def:aux_var},
	and give the following inequalities for all $S\subseteq \{1,2,\ldots,d\}$ with $i, j\in S$ and $i< j$:\begin{multline}\label{ineq:bicluster}
	\sum_{W^1:W^1\cap S=\emptyset}\sum_{W^2:W^2\cap S=\emptyset}I(W^1\rightarrow i,W^2\rightarrow j,i\leftrightarrow j)+\\
	\sum_{v\in S\setminus\{i,j\}}\sum_{W:W\cap S=\emptyset}I(W\rightarrow v)\geq I(i\leftrightarrow j).
	\end{multline}
	We refer to inequalities \eqref{ineq:bicluster} as the bicluster inequalities/constraints. Bicluster inequalities encode the constraint that if a bidirected edge $i\leftrightarrow j$ is present in some ancestral ADMG, then, for any set of nodes $S$ containing $i$ and $j$, either some node $v\in S\setminus\{i,j\}$ has all its parents outside of $S$ or both parent sets of $i$ and $j$ have no intersection with $S$. As shown in Figure~\ref{fig:almostdircyc}, where $B$ and $C$ are connected by a bidirected edge,  either the parents of $A$ must lie outside $S=\{A, B, C\}$, or the parents of both $B$ and $C$ must lie outside $S$. Similar to cluster inequalities, bicluster inequalities can also be strengthened when written in the original $z$ variable space as some c-component variables are double-counted on the left-hand side of \eqref{ineq:bicluster}. Also some c-component variables might be contradicting the presence of the bidirected edge $i\leftrightarrow j$, and therefore cannot be active when $I(i\leftrightarrow j)$ is active and should be removed from the left-hand side.\begin{prop}
		The following inequalities are valid for the integer program \eqref{obj}-\eqref{nocycon}:\begin{multline}\label{ineq:lift_bicluster}
		\sum_{C\in\mathcal{C}(S;\{i,j\})}z_C\geq I(i\leftrightarrow j), ~~\forall S\subseteq\{1,2,\ldots,d\}\\
		\text{ with }i\in S,j\in S,i<j
		\end{multline}
		where \begin{multline*}
		\mathcal{C}(S;\{i,j\})=\\
		\{C\in\mathcal{C}:\{i,j\}\in E_C,(W_{C,i}\cup W_{C,j})\cap S=\emptyset \}\cup\\
		\{C\in \mathcal{C}:|\{i,j\}\cap D_C|\neq 1,\exists v\in D_C\cap S\setminus\{i,j\}\\
		\text{ s.t. }W_{C,v}\cap S=\emptyset\}.
		\end{multline*}
		Any $z\in\{0,1\}^d$ satisfying \eqref{sumTo1}, \eqref{ineq:strengthened_cluster} and \eqref{ineq:lift_bicluster} corresponds to an ancestral ADMG.
	\end{prop}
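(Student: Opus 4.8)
The plan is to prove the two assertions separately: first that \eqref{ineq:lift_bicluster} holds at every feasible solution of \eqref{obj}--\eqref{nocycon}, and then that \eqref{sumTo1}, \eqref{ineq:strengthened_cluster} and \eqref{ineq:lift_bicluster} together force the learned graph to be ancestral. The driver of validity is a purely combinatorial fact about ancestral ADMGs, which I would isolate first: if $\G$ is an ancestral ADMG containing the bidirected edge $i\leftrightarrow j$ and $S$ is any node set with $i,j\in S$, then either some $v\in S\setminus\{i,j\}$ has $\mathrm{pa}_\G(v)\cap S=\emptyset$, or \emph{both} $\mathrm{pa}_\G(i)\cap S=\emptyset$ and $\mathrm{pa}_\G(j)\cap S=\emptyset$. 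Granting this, the passage to $z$-variables is bookkeeping: in a feasible $\bar z$ with $I(i\leftrightarrow j)=1$, the nodes $i,j$ lie in a common district $D$, whose implied (active) c-component $C^\ast$ satisfies $\{i,j\}\in E_{C^\ast}$, $W_{C^\ast,i}=\mathrm{pa}_{\G(\bar z)}(i)$ and $W_{C^\ast,j}=\mathrm{pa}_{\G(\bar z)}(j)$; for any other node $v$, the unique active c-component $C_v$ with $v\in D_{C_v}$ has $W_{C_v,v}=\mathrm{pa}_{\G(\bar z)}(v)$, and because $i$ and $j$ share a district of $\G(\bar z)$ we always have $|\{i,j\}\cap D_{C_v}|\in\{0,2\}$. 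The second alternative of the lemma then puts $C^\ast$ into the first set in the definition of $\mathcal{C}(S;\{i,j\})$, while the first alternative puts the relevant $C_v$ into the second set; either way the left-hand side of \eqref{ineq:lift_bicluster} is at least $1=I(i\leftrightarrow j)$.

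For the lemma I would fix a topological order of $\G$ (it is acyclic) and argue by reachability. Let $A$ be the set of nodes of $S$ reachable from $i$ by a directed path lying entirely in $S$, together with $i$ itself; ancestrality of $\G$ together with the presence of $i\leftrightarrow j$ forces $j\notin A$, so $S\setminus A$ is nonempty. Let $w$ be a topologically minimal element of $S\setminus A$. If $w$ had a parent $p\in S$, then $p\in A$ would force $w\in A$ (extend the path by $p\rightarrow w$), whereas $p\in S\setminus A$ would contradict minimality of $w$; hence $\mathrm{pa}_\G(w)\cap S=\emptyset$, and moreover $w\neq i$. If $w\neq j$ we obtain the first alternative. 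If $w=j$, then $\mathrm{pa}_\G(j)\cap S=\emptyset$; running the same argument with the roles of $i$ and $j$ exchanged produces a node with no parent in $S$ that is not $j$, and this node either gives the first alternative or equals $i$, yielding the second.

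For the converse, suppose $z$ satisfies \eqref{sumTo1}, \eqref{ineq:strengthened_cluster} and \eqref{ineq:lift_bicluster}. By Proposition~\ref{prop:ADMG}, $\G(z)$ is already an ADMG, so it remains only to exclude almost directed cycles. Assume $\G(z)$ contains one: a directed path $i_0\rightarrow i_1\rightarrow\cdots\rightarrow i_K$ with $K\geq1$ together with the bidirected edge $i_0\leftrightarrow i_K$. Put $S'=\{i_0,\dots,i_K\}$ and let $\{i,j\}=\{i_0,i_K\}$ with $i<j$, so $I(i\leftrightarrow j)=1$. I claim that no active c-component lies in $\mathcal{C}(S';\{i,j\})$. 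Indeed, if an active $C$ had $\{i,j\}\in E_C$, then $i,j\in D_C$ and the path-predecessor $i_{K-1}$ of $i_K$ lies in $\mathrm{pa}_{\G(z)}(i_K)\cap S'\subseteq(W_{C,i}\cup W_{C,j})\cap S'$, so this set is nonempty; and if some $v=i_m$ with $1\le m\le K-1$ satisfies $v\in D_C\cap S'\setminus\{i,j\}$, then its predecessor $i_{m-1}$ lies in $W_{C,v}\cap S'$, again nonempty. Hence the left-hand side of \eqref{ineq:lift_bicluster} at $S'$ equals $0<1=I(i\leftrightarrow j)$, contradicting \eqref{ineq:lift_bicluster}; so $\G(z)$ has no almost directed cycle and is an ancestral ADMG.

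I expect the main obstacle to be the combinatorial lemma, and specifically the realization that one must use two reachability sets (one rooted at $i$ and one at $j$) rather than a single topologically minimal vertex of $S$: the naive argument only controls one of $i,j$. The rest is routine but needs care in matching the c-component accounting to the two alternatives of the lemma — using the identity $W_{C,v}=\mathrm{pa}_{\G(z)}(v)$ for $v\in D_C$, the fact that $i\leftrightarrow j$ forces $i,j$ into one district so that $|\{i,j\}\cap D_C|\neq1$ for every active $C$, and the exact edge- and district-membership conditions in the definition of $\mathcal{C}(S;\{i,j\})$ — so that the c-components exhibited in the validity argument genuinely belong to the stated set, and conversely the active c-components arising from an almost directed cycle are genuinely excluded from it.
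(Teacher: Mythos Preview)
Your proof is correct and rests on the same idea as the paper's: both directions hinge on the combinatorial fact that if $i\leftrightarrow j$ in an ancestral ADMG and $i,j\in S$, then either some $v\in S\setminus\{i,j\}$ has $\mathrm{pa}_\G(v)\cap S=\emptyset$, or both $i$ and $j$ do. The organization differs: you extract this as a standalone lemma and prove it by forward reachability (the set $A$ of $S$-descendants of $i$) together with a topological-minimum pick in $S\setminus A$, then run the symmetric argument from $j$; the paper instead argues directly inside the contradiction by walking \emph{backwards} along parents from $\bar i$ inside $\bar S$ until the walk either closes a directed cycle or reaches a node with no $\bar S$-parent, and then shows that node must equal $\bar j$, producing an almost directed cycle. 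Your lemma-first packaging is a bit cleaner and reusable, while the paper's backward walk is more direct; the c-component bookkeeping (using $W_{C,v}=\mathrm{pa}_{\G(z)}(v)$ for $v\in D_C$ and the fact that $|\{i,j\}\cap D_C|\in\{0,2\}$ for every active $C$) and the converse argument via the node set $S'$ of an almost directed cycle are essentially identical in both.
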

	\begin{proof}
		We first show validity of \eqref{ineq:lift_bicluster}. Let $\bar{z}$ be a feasible solution of \eqref{obj}-\eqref{nocycon}. Assume for contradiction that inequality \eqref{ineq:lift_bicluster} with $S=\bar{S}$ and $(i,j)=(\bar{i},\bar{j})$ is violated by $\bar{z}$. Note that the right-hand side value of \eqref{ineq:lift_bicluster} can only be binary and \eqref{ineq:lift_bicluster} cannot be violated if the right hand side is $0$ since $z$ is binary. Then there exists a bidirected edge between $\bar{i}$ and $\bar{j}$ in $\G(\bar{z})$. Since $z=\bar{z}$ violates \eqref{ineq:lift_bicluster} with $S=\bar{S}$ and $(i,j)=(\bar{i},\bar{j})$, $\bar{z}_C=0$ for each $C\in\mathcal{C}(\bar{S};\{\bar{i},\bar{j}\})$. Next we construct an almost directed cycle in $\G(\bar{z})$. Because variable $I(\bar{i}\leftrightarrow \bar{j})$ is active, there exists $C_{\bar{i}\bar{j}}\in\mathcal{C}$ such that $\{\bar{i},\bar{j}\}\in E_{C_{\bar{i}\bar{j}}}$ and $\bar{z}_{C_{\bar{i}\bar{j}}}=1$. Since $\bar{z}_C=0$ for each $C\in\mathcal{C}(\bar{S};\{\bar{i},\bar{j}\})$, we have $(W_{C_{\bar{i}\bar{j}},\bar{i}}\cup W_{C_{\bar{i}\bar{j}},\bar{j}})\cap \bar{S}\neq \emptyset$. Without loss of generality, assume $W_{C_{\bar{i}\bar{j}},\bar{i}}\cap \bar{S}\neq \emptyset$. Let $i_0=\bar{i}$. We recursively define $i_k$ as follows:\begin{itemize}
			\item Let $i_k$ be a parent of $i_{k-1}$ such that $i_k\in \bar{S}$.
		\end{itemize} 
		Then this sequence either finds a directed cycle because of the finiteness of the number of nodes, which contradicts the feasibility of $\bar{z}$, or ends up with some node $i_K$ that has no parent in $\bar{S}$. If it is the second case, we next show that $i_K$ can only be $j$ which implies an almost directed cycle $i_K\rightarrow i_{K-1}\rightarrow\ldots\rightarrow i_0\leftrightarrow i_K$ in $\G(\bar{z})$. First of all, $i_0=\bar{i}$ has a parent in $\bar{S}$ since $W_{C_{\bar{i}\bar{j}},\bar{i}}\cap \bar{S}\neq \emptyset$. Let $\bar{C}\in\mathcal{C}$ be such that $i_K\in D_{\bar{C}}$ and $z_{\bar{C}}=1$. Assume for contradiction that $i_K\neq j$. Then $i_K\in D_{\bar{C}}\cap S\setminus\{i,j\}$ and $W_{\bar{C},i_K}\cap \bar{S}=\emptyset$. Also note that either $\{i,j\}\cap D_{\bar{C}}=\emptyset$ or $\{i,j\}\subseteq D_{\bar{C}}$ since otherwise it contradicts the activeness of $I(i\leftrightarrow j)$. This implies that $\bar{C}\in \mathcal{C}(\bar{S};\{\bar{i},\bar{j}\})$ which contradicts the fact that $\bar{z}_C=0$ for each $C\in\mathcal{C}(\bar{S};\{\bar{i},\bar{j}\})$.
		
		We next show that $\G(z)$ is an ancestral ADMG if $z$ satisfies \eqref{sumTo1}, \eqref{ineq:strengthened_cluster} and \eqref{ineq:lift_bicluster}. By Proposition \ref{prop:ADMG}, $\G(z)$ is an ADMG. We only need to show that $\G(z)$ contains no almost directed cycles. Assume for contradiction that $\G(z)$ contains an almost directed cycle $i_0\rightarrow i_1\rightarrow\ldots\rightarrow i_K\leftrightarrow i_0$. Let $S'=\{i_0,i_1,\ldots,i_K\}$. We can then easily verify that $z_{C}=0$ for any $C\in \mathcal{C}(S';\{i_0,i_K\})$ which contradicts \eqref{ineq:lift_bicluster} with $S=S'$, $i=\min\{i_0,i_K\}$ and $j=\max\{i_0,i_K\}$.
	\end{proof}
	We provide a simple example of a strengthened bicluster inequality when the problem contains only 3 variables.
	\begin{example}
		Let $d=3$ and $\mathcal{C}$ be the collection of all possible c-components on three nodes. Consider $S=\{1,2,3\}$, $i=2$ and $j=3$. Then the asociated strengthened bicluster inequality (after simplification) is\begin{equation}\label{exmp:bicluster}
		z_{\emptyset\rightarrow1}\geq z_{\{1\}\rightarrow2\leftrightarrow3\leftarrow\{1\}}+z_{\emptyset\rightarrow2\leftrightarrow3\leftarrow\{1\}}+z_{\{1\}\rightarrow2\leftrightarrow3\leftarrow\emptyset}.
		\end{equation}
		Note that at most one of the c-components on the right-hand side of \eqref{exmp:bicluster} ($\{1\}\rightarrow2\leftrightarrow3\leftarrow\{1\}$, $\emptyset\rightarrow2\leftrightarrow3\leftarrow\{1\}$ and $\{1\}\rightarrow2\leftrightarrow3\leftarrow\emptyset$) can be active due to \eqref{sumTo1}. Inequality \eqref{exmp:bicluster} enforces the constraint that c-component $\emptyset\rightarrow1$ must be active when one of the three c-components on the right-hand side of \eqref{exmp:bicluster} is active.
	\end{example}
	
	So far we have given an integer programming formulation of the ancestral ADMG learning problem. However, solving this integer program efficiently is nontrivial. The first step to solve an integer program is often solving some polyhedral relaxation of the problem. Directly solving the relaxed problem over the polyhedron defined by \eqref{sumTo1}, \eqref{ineq:strengthened_cluster} and \eqref{ineq:lift_bicluster} is computationally infeasible because of the exponential number of constraints. A common practice for solving such problems with exponentially many constraints is to further relax the polyhedral relaxation to be the polyhedron defined by only some of these constraints, find an optimal solution, solve a corresponding {\em separation problem}, and add violated constraints to the relaxation if they exist. The separation problem finds constraints that are violated by the current solution and this process is repeated until convergence. If the optimal solution of the relaxed problem is integral, such a separation problem translates to just finding a directed or almost directed cycle in a directed mixed graph which can easily be accomplished by depth-first search. However, the separation problem can be much harder to solve when the solution is fractional. We provide separation heuristics based on variants of Karger's random contraction algorithm \cite{karger1993global} in the supplement.
	
	\section{Empirical Evaluation}\label{sec:numerical}
	We conduct a set of experiments to compare our IP model, AGIP, with existing state-of-the-art baselines. We use DAGIP to represent existing DAG IP models such as the one in GOBNILP \cite{bartlett2017integer}, which is the same IP model as AGIP when all candidate c-components are  implied by single-node districts. The solution obtained from DAGIP is equivalent to any exact score-based method for generating DAG solutions. We also compare with non-IP-based approaches, namely M$^3$HC \cite{tsirlis2018scoring}, FCI \cite{spirtes2000causation,zhang2008completeness}, and cFCI \cite{ramsey2012adjacency}.
	
	% Specifically, we consider the following algorithms for learning an ancestral ADMG:\begin{itemize}
	%     \item AGIP: our proposed IP model.
	%     \item DAGIP: the same IP model as AGIP but with all c-components implied by single-node districts. The solution obtained from DAGIP is equivalent to any exact score-based method for generating DAG solutions, such as GOBNILP \cite{bartlett2017integer}.
	%     \item .
	% \end{itemize}
	
	To measure solution quality, we use a few different metrics. When comparing against score-based methods, our optimization model objective is to maximize a score, such as the BIC score \cite{schwarz1978estimating} for model selection, and the metric is the solution score. To compare with constraint-based methods which do not have objective scores,  the solution graph is converted to a partial ancestral graph (PAG), which characterizes a class of Markov equivalent AG solutions, and then compared with the ground truth PAG. We use structural Hamming distance (SHD) \cite{tsamardinos2006max}, which is the number of edge operations (addition or deletion of an undirected edge, addition, removal or reversion of the orientation of an edge) between the predicted graph and the ground truth graph. Finally, precision and recall \cite{tillman2011learning} are also compared. They are the number of correct  edges with correct orientations in the predicted graph divided by the number of edges in the predicted graph and by the number of edges in the ground truth graph, respectively.

	% The following metrics are used to measure the solution quality:\begin{itemize}
	
	%     \item Score: the optimization objective such as the BIC score \cite{schwarz1978estimating}, for the model selection, which we would like to optimize when using score-based methods.
	%     \item Structural Hamming distance (SHD) \cite{tsamardinos2006max}: the number of edge operations (addition or deletion of an undirected edge, addition, removal or reversion of the orientation of an edge) between the predicted graph and the ground truth graph.
	%     \item Precision and recall \cite{tillman2011learning}: the number of correct  edges with correct orientations in the predicted graph divided by the number of edges in the predicted graph and by the number of edges in the ground truth graph, respectively.
	% \end{itemize}

	All experiments are run on a Windows laptop with 16GB RAM and an Intel Core i7-7660U processor running at 2.5GHz. Integer programs are solved using the optimization solver Gurobi 9.0.3.
	
	\subsection{ Experiment 1: Exact Graph Recovery}
	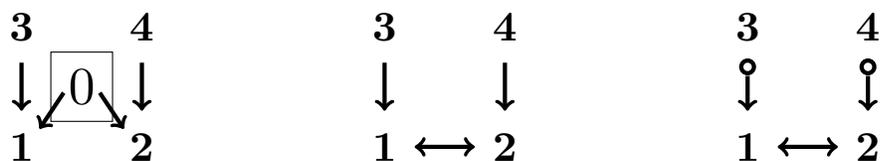
\begin{figure}[hbt!]
		\centering
		\begin{subfigure}{0.3\columnwidth}
			\centering
			\begin{tikzpicture}[scale=0.8]
			\node[scale=1.6] at (0,0) {\textbf{3}};
			\draw[->,ultra thick] (0,-0.6) -- (0,-1.4);
			\node[scale=1.6] at (0,-2) {\textbf{1}};
			\node[draw,scale=2] at (1,-1) {0};
			\draw[->,ultra thick] (0.7,-1.1) -- (0.3,-1.7);
			\node[scale=1.6] at (2,-2) {\textbf{2}};
			\node[scale=1.6] at (2,0) {\textbf{4}};
			\draw[->,ultra thick] (1.3,-1.1) -- (1.7,-1.7);
			\draw[->,ultra thick] (2,-0.6) -- (2,-1.4);
			\end{tikzpicture}
		\end{subfigure}
		\begin{subfigure}{0.3\columnwidth}
			\centering
			\begin{tikzpicture}[scale=0.8]
			\node[scale=1.6] at (0,0) {\textbf{3}};
			\draw[->,ultra thick] (0,-0.6) -- (0,-1.4);
			\node[scale=1.6] at (0,-2) {\textbf{1}};
			\node[scale=1.6] at (2,-2) {\textbf{2}};
			\node[scale=1.6] at (2,0) {\textbf{4}};
			\draw[<->,ultra thick] (0.5,-2) -- (1.5,-2);
			\draw[->,ultra thick] (2,-0.6) -- (2,-1.4);
			\end{tikzpicture}
		\end{subfigure}
		\begin{subfigure}{0.3\columnwidth}
			\centering
			\begin{tikzpicture}[scale=0.8]
			\node[scale=1.6] at (0,0) {\textbf{3}};
			\draw[->,ultra thick,shorten <= 5pt] (0,-0.6) -- (0,-1.4);
			\node[scale=1.6] at (0,-2) {\textbf{1}};
			\node[scale=1.6] at (2,-2) {\textbf{2}};
			\node[scale=1.6] at (2,0) {\textbf{4}};
			\draw[ultra thick] (0,-0.67) circle (3pt);
			\draw[ultra thick] (2,-0.67) circle (3pt);
			\draw[<->,ultra thick] (0.5,-2) -- (1.5,-2);
			\draw[->,ultra thick,shorten <= 5pt] (2,-0.6) -- (2,-1.4);
			\end{tikzpicture}
		\end{subfigure}
		\caption{Ground truth DAG (left), AG (middle), and PAG (right) of the four-node graph.}
		\label{ground_truth:4-node}
		%\vskip -0.2in
	\end{figure}
	
	We first test on a four-node example, where the data (10000 data points) is simulated from a five-node DAG model (see Figure \ref{ground_truth:4-node}) with node $0$ being unobserved. The purpose of this experiment is to show that for small graphs, where we can practically enumerate and use all possible c-component variables, and with enough samples, exact graph recovery is possible with AGIP.
	We test score-based methods AGIP, DAGIP and M$^3$HC on this example. In AGIP, we consider all possible c-components with arbitrary sizes. In DAGIP, we consider all possible c-components implied by single-node districts.
	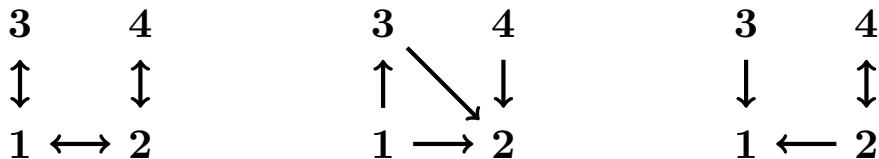
\begin{figure}[hbt!]
		\centering
		\begin{subfigure}{0.3\columnwidth}
			\centering
			\begin{tikzpicture}[scale=0.8]
			\node[scale=1.6] at (0,0) {\textbf{3}};
			\draw[<->,ultra thick] (0,-0.6) -- (0,-1.4);
			\node[scale=1.6] at (0,-2) {\textbf{1}};
			\node[scale=1.6] at (2,-2) {\textbf{2}};
			\node[scale=1.6] at (2,0) {\textbf{4}};
			\draw[<->,ultra thick] (0.5,-2) -- (1.5,-2);
			\draw[<->,ultra thick] (2,-0.6) -- (2,-1.4);
			\end{tikzpicture}
		\end{subfigure}
		\begin{subfigure}{0.3\columnwidth}
			\centering
			\begin{tikzpicture}[scale=0.8]
			\node[scale=1.6] at (0,0) {\textbf{3}};
			\draw[<-,ultra thick] (0,-0.6) -- (0,-1.4);
			\node[scale=1.6] at (0,-2) {\textbf{1}};
			\node[scale=1.6] at (2,-2) {\textbf{2}};
			\node[scale=1.6] at (2,0) {\textbf{4}};
			\draw[->,ultra thick] (0.5,-2) -- (1.5,-2);
			\draw[->,ultra thick] (2,-0.6) -- (2,-1.4);
			\draw[->,ultra thick] (0.4,-0.4) -- (1.6,-1.6);
			\end{tikzpicture}
		\end{subfigure}
		\begin{subfigure}{0.3\columnwidth}
			\centering
			\begin{tikzpicture}[scale=0.8]
			\node[scale=1.6] at (0,0) {\textbf{3}};
			\draw[->,ultra thick] (0,-0.6) -- (0,-1.4);
			\node[scale=1.6] at (0,-2) {\textbf{1}};
			\node[scale=1.6] at (2,-2) {\textbf{2}};
			\node[scale=1.6] at (2,0) {\textbf{4}};
			\draw[<-,ultra thick] (0.5,-2) -- (1.5,-2);
			\draw[<->,ultra thick] (2,-0.6) -- (2,-1.4);
			\end{tikzpicture}
		\end{subfigure}
		\caption{Solutions obtained from AGIP (left), DAGIP (middle) and M$^3$HC (right).}
		\label{solns:4-node}
	\end{figure}
	
	These three methods generate three different solutions. Comparing the scores of the solutions, we observe that score(AGIP) $>$ score(DAGIP) $>$ score(M$^3$HC) in this example. %Since AGIP is an exact solution method, it is guaranteed to generate a solution with the highest score.
	Both AGIP and M$^3$HC correctly capture the skeleton (presence of edges between each pair of nodes) of the ground truth AG. Only the AGIP solution is Markov equivalent to the ground truth AG, i.e., the AGIP solution encodes the same set of conditional independence relationships as the ground truth AG. %In principle, the ground truth AG should have the same score as the AGIP solution (since they are Markov equivalent). In our experiment, compared with the ground truth, the AGIP solution has a slightly better score %(but same in the first 9 digits) due to the numerical tolerance in the scoring part.
	This result shows that our method is consistent for large samples and finds the exact solution. 
	
	\subsection{Experiment 2: Random Graphs}\label{exp:medium}
	\begin{table}[tb!]
		\centering
		\caption{Tightness of the AGIP formulation.}
		\begin{tabular}{ @{\extracolsep{\fill}} cccccc}
			\toprule
			\multirow{2}{*}{$(d,l,N)$} & Avg \# bin vars & Avg \# bin vars & Avg pruning & Avg root & Avg solution\\
			& before pruning & after pruning &  time (s) &  gap (\%) & time (s)\\
			\midrule\midrule\addlinespace
			$(18,2,\phantom{0}1000)$ & 59229 & 4116 & 19.1 & 0.65 & \phantom{0}60.4\\
			$(16,4,\phantom{0}1000)$ & 39816 & 3590 & 13.6 & 0.43 & \phantom{0}41.0\\
			$(14,6,\phantom{0}1000)$ & 20671 & 1788 & \phantom{0}3.9 & 0.54 & \phantom{00}8.9\\
			$(18,2,10000)$ & 59229 & 9038 & 33.0 & 0.67 & 323.2\\
			$(16,4,10000)$ & 39816 & 7378 & 21.4 & 0.53 & 215.4\\
			$(14,6,10000)$ & 20671 & 3786 & \phantom{0}6.4 & 0.56 & \phantom{0}47.2\\
			\bottomrule
		\end{tabular}
		\label{tab:IP_info}
		\vskip 0.1in
	\end{table}
	
	\begin{table}[tb!]
		\centering
		\caption{Comparing scores of AGIP, DAGIP, and M$^3$HC.}
		\begin{tabular}{ @{\extracolsep{\fill}} cccccc}
			\toprule
			\multirow{2}{*}{$(d,l,N)$} & \multicolumn{2}{c}{Avg improvement in score} & \# instances where AGIP\\
			& \multicolumn{2}{c}{compared with M$^3$HC} & improves over DAGIP in score\\
			\midrule\midrule\addlinespace
			& AGIP & DAGIP\\
			\cmidrule(lr){2-3}
			$(18,2,\phantom{0}1000)$ & \phantom{0}82.75 & \phantom{0}82.32 & 3/10\\
			$(16,4,\phantom{0}1000)$ & \phantom{0}90.03 & \phantom{0}89.33 & 5/10\\
			$(14,6,\phantom{0}1000)$ & \phantom{0}34.84 & \phantom{0}34.68 & 3/10\\
			$(18,2,10000)$ & 373.44 & 373.44 & 0/10\\
			$(16,4,10000)$ & 147.96 & 147.54 & 1/10\\
			$(14,6,10000)$ & 150.52 & 150.44 & 1/10\\
			\bottomrule
		\end{tabular}
		\label{tab:scores}
		\vskip 0.1in
	\end{table}
	We further experiment on a set of randomly generated ADMGs following the procedure described in \cite{triantafillou2016score}. For each instance, a random permutation is applied to $d+l$ variables in a DAG as the ordering of the variables. For each variable $i$, a set of up to 3 variables that have higher ordering than $i$ is randomly chosen as the parent set of $i$. The resulting DAG is then assigned a randomly generated conditional linear Gaussian parameterization. Within the $d+l$ variables in the DAG, $l$ of them are randomly chosen as latent variables and marginalized which result in an AG over the observed $d$ variables. For each such graphical model, a sample of $N\in\{1000,10000\}$ realizations of the observed variables is simulated to create the instance. For fixed $(d,l,N)$, 10 instances with parameters $d$, $l$, and $N$ are generated.
	\subsubsection{Comparison between AGIP, DAGIP and M$^3$HC solutions}\label{exp:score-based}
	To guarantee efficiency, we restrict the sizes of c-components considered in the AGIP and DAGIP methods. AGIP considers c-components implied by a single-node district with up to 3 parents or a two-node district with up to 1 parent per node, while DAGIP considers c-components implied by a single-node district with up to 3 parents.
	
	We want to emphasize that ADMG learning can be much harder than DAG learning. In the state-of-the-art DAG learning IP model \cite{bartlett2017integer}, assuming $n$ variables and parent sets with maximum size $k$,
	%(i.e., the max node indegree of the learned DAG is $k$)
	there are at least $n\binom{n-1}{k}=\Theta(n^{k+1})$ IP variables in total for fixed $k$ but increasing $n$ (before pruning). For the ADMG learning problem, assuming a maximum district size of $p$ nodes, the number of IP variables in our IP model (AGIP) is at least $\binom{n}{p}\binom{n-p}{k}^p=\Theta(n^{p(k+1)})$ for fixed $k,p$ but increasing $n$ (before pruning). When $p=2$ (the minimum required to model bidirected edges), AGIP has the square of the number of IP variables in DAGIP. With our setting of the experiments, AGIP has roughly double the IP variables of DAGIP. %As IP solution times grow rapidly with increasing number of IP variables, this makes AGIP much harder than DAGIP.
	One possible way to deal with this explosive growth is to add a collection of potentially good c-components with large-districts to AGIP rather than all possible ones.

	In these experiments, we fix $d+l=20$ with varying $l\in\{2,4,6\}$. We first show the tightness of our IP formulation. For each combination of $(d,l,N)$, we report in Table \ref{tab:IP_info} the average number of binary variables before and after pruning, average pruning time, average integrality gap at the root node, and average IP solution time for the AGIP model over the 10 instances. We observe that the solution time decreases as $l$ increases. On the other hand, the solution time increases when the number of samples $N$ increases, since fewer c-component variables can be pruned during the pruning phase. For all cases, the IP formulation has a pretty small gap at the root node which is always below 1\% on average.
	
	We next compare the qualities of solutions obtained from AGIP, DAGIP, and M$^3$HC in terms of the scores. Since the feasible region of the AGIP model is strictly larger than DAGIP, the optimal score obtained from AGIP is guaranteed to be at least as good as the optimal score obtained from DAGIP. In Table \ref{tab:scores}, we report the average difference (improvement) in score compared with M$^3$HC and the number of instances where AGIP improves the score over DAGIP for each combination of $(d,l,N)$. Both AGIP and DAGIP produce solutions better than the greedy algorithm M$^3$HC, although M$^3$HC in principle searches over a larger space of graph structures. In fact, M$^3$HC finds slightly better solutions for only 4 of the 60 instances we tested and performed strictly worse than AGIP on the other 53 instances. Therefore, we can conclude that the exact methods DAGIP and AGIP are better at obtaining solutions with higher scores than the greedy method M$^3$HC on randomly generated instances. We also observe that AGIP improves over DAGIP %only on a relatively small fraction
	on 13 of 60 instances. There are two particular explanations for this. Firstly, the implied ground truth AG might be Markov equivalent to a DAG in which case the DAG solution can be optimal. Secondly, the ``non-DAG" candidate c-components are limited as AGIP only considers additionally c-components implied by a two-node district with up to 1 parent each node compared with DAGIP.

	\subsubsection{Comparison between Score-based Methods and Constraint-based Methods}
	\begin{table}[tb!]
		\centering
		\caption{Comparison between AGIP, M$^3$HC, FCI, and cFCI for ancestral ADMG Learning.}
		\begin{tabular}{ @{\extracolsep{\fill}} ccccccccccccc}
			\toprule
			$(d,l,N)$ & \multicolumn{4}{c}{SHD} & \multicolumn{4}{c}{Precision (\%)} & \multicolumn{4}{c}{Recall (\%)}\\
			\midrule\midrule\addlinespace
			& AGIP & M$^3$HC & FCI & cFCI & AGIP & M$^3$HC & FCI & cFCI & AGIP & M$^3$HC & FCI & cFCI\\
			\cmidrule(lr){2-5}\cmidrule(lr){6-9}\cmidrule(lr){10-13}
			$(18,2,\phantom{0}1000)$ & 36.0 & 32.0 & 32.9 & \textbf{25.9} &  41.7 & 37.6 & 30.9 & \textbf{49.5} & \textbf{41.9} & 30.8 & 24.7 & 38.5\\
			$(16,4,\phantom{0}1000)$ & \textbf{23.4} & 32.1 & 34.9 & 29.7 &  \textbf{58.0} & 35.9 & 27.6 & 41.0 & \textbf{56.6} & 29.3 & 22.8 & 32.5\\
			$(14,6,\phantom{0}1000)$ & \textbf{30.1} & 34.7 & 34.1 & 30.6 & \textbf{43.2} & 36.0 & 26.3 & 42.0 & \textbf{37.0} & 24.8 & 18.7 & 30.0\\
			$(18,2,10000)$ & \textbf{12.6} & 26.4 & 30.8 & 24.8 & \textbf{80.3} & 54.1 & 29.6 & 50.0 & \textbf{78.7} & 49.3 & 26.2 & 44.1\\
			$(16,4,10000)$ & 24.6 & 23.9 & 26.7 & \textbf{23.1} & \textbf{60.6} & 56.3 & 43.7 & 51.1 & \textbf{58.4} & 51.6 & 38.5 & 45.6\\
			$(14,6,10000)$ & 26.5 & 23.5 & 27.7 & \textbf{23.3} & 47.7 & \textbf{54.2} & 38.6 & 53.1 & 44.5 & \textbf{46.5} & 33.2 & 44.4\\
			overall & \textbf{25.5} & 28.8 & 31.2 & 26.2 & \textbf{55.3} & 45.7 & 32.8 & 47.8 & \textbf{52.8} & 38.7 & 27.4 & 39.2\\
			\bottomrule
		\end{tabular}
		\label{tab:constraint}
		\vskip 0.1in
	\end{table}
	\begin{figure}[tb!]
		\centering
		%\hspace{0.5cm}
		\begin{subfigure}{0.3\textwidth}
			\includegraphics[width=0.8\textwidth]{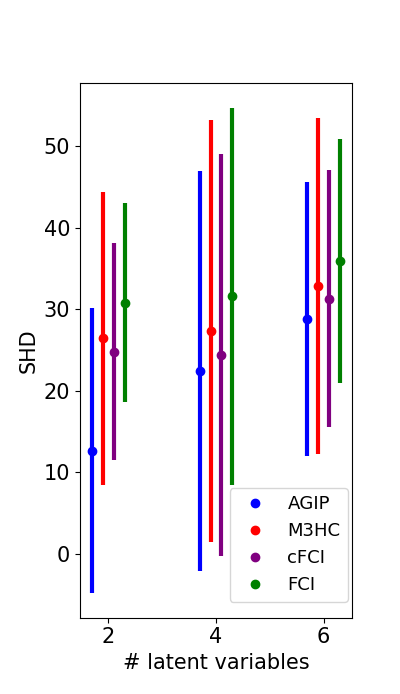}
		\end{subfigure}
		\begin{subfigure}{0.3\textwidth}
			\includegraphics[width=0.8\textwidth]{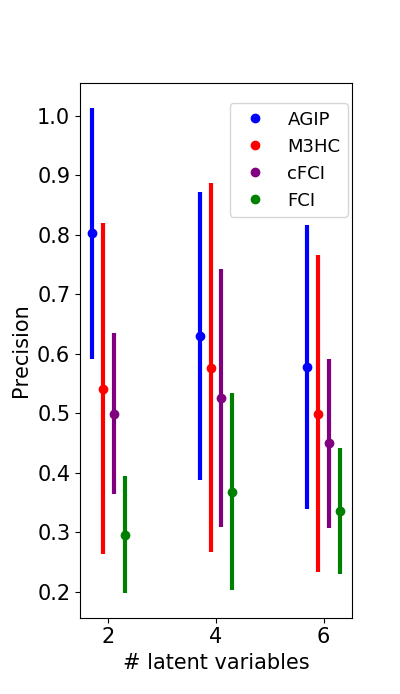}
		\end{subfigure}
		\begin{subfigure}{0.3\textwidth}
			\includegraphics[width=0.8\textwidth]{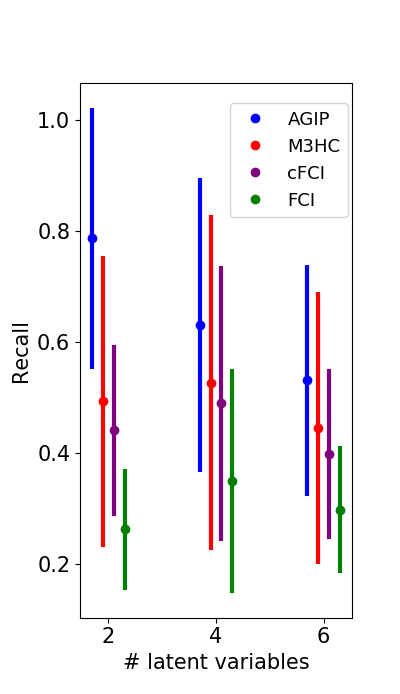}
		\end{subfigure}
		\caption{Performance of different methods when the number of latent variables increases.}
		\label{fig:num_latent}
	\end{figure}
	We compare score-based methods AGIP and M$^3$HC with constraint-based methods FCI and cFCI in Table \ref{tab:constraint}. %  we give results comparing the performance of AGIP, M$^3$HC, FCI, and cFCI. %AGIP, M$^3$HC and cFCI have comparable performances over the three metrics while 
	All methods have better precision and recall values when the sample size $N$ increases. FCI seems to perform worse than the other 3 methods, and on average, AGIP has the best SHD, precision, and recall.
	
	We observe that AGIP has a better recall in 5 out of 6 settings. %and cFCI performs well in SHD. 
	AGIP has significantly better performance than the other methods when $(d,l,N)=(18,2,10000)$.
	
	To see the impact of latent variables on the performance of these methods, we also regenerate instances with fixed $d=18$ and $N=10000$ but varying $l\in\{2,4,6\}$. For each $l$, we plot on Figure \ref{fig:num_latent} the mean ($\pm$ standard deviation) of SHD, precision, and recall of estimated graphs from each method over 10 instances (see the supplement for detailed results). The performance of each method tends to drop as the number of latent variables increases. The drop is most significant for AGIP, presumably due to the restriction on the c-component size. The ground truth graphs associated with our randomly generated instances can have c-components with large parental sizes and district sizes, especially in cases with more latent variables.
	
	\subsection{Experiment 3: Non-DAG ADMGs}
	\begin{table}[tb!]
		\centering
		\caption{Comparison between AGIP and DAGIP on graphs that are not DAG-representable.}
		\begin{tabular}{ @{\extracolsep{\fill}} ccccccccccccc}
			\toprule
			Graph index & \multicolumn{2}{c}{Avg SHD } & \multicolumn{2}{c}{Avg precision (\%)} & \multicolumn{2}{c}{Avg recall (\%)} & \# instances where AGIP\\
			& & & & & & & improves over DAGIP in score\\
			\midrule\midrule\addlinespace
			& AGIP & DAGIP & AGIP & DAGIP & AGIP & DAGIP\\
			\cmidrule(lr){2-3}\cmidrule(lr){4-5}\cmidrule(lr){6-7}
			1 & 6.7 & \textbf{6.6} & \textbf{63.7} & 59.5 & \textbf{64.4} & 60.0 & 10/10\\
			2 & \textbf{9.2} & 10.5 & \textbf{59.4} & 50.5 & \textbf{63.0} & 52.0 & \phantom{0}7/10\\
			3 & \textbf{8.0} & 8.8 & \textbf{67.3} & 64.8 & \textbf{63.8} & 60.0 & \phantom{0}5/10\\
			4 & \textbf{29.8} & \textbf{29.8} & 27.4 & \textbf{29.2} & 17.6 & \textbf{19.0} & \phantom{0}4/10\\
			5 & \textbf{21.7} & 23.0 & \textbf{30.0} & 27.6 & \textbf{27.3} & 24.7 & \phantom{0}2/10\\
			overall & \textbf{15.1} & 15.7 & \textbf{49.6} & 46.3 & \textbf{47.2} & 43.1 & \phantom{0}28/50\\
			\bottomrule
		\end{tabular}
		\label{tab:non-DAG}
		\vskip 0.1in
	\end{table}
	%We observed in Section \ref{exp:score-based} that AGIP and DAGIP have very similar performances on randomly generated instances.
	
	Although AGIP generates a solution with the same or better score than DAGIP, it returns a strictly better score than DAGIP on a small fraction (13/60) of instances in Experiment 2. One possibility is that all ground-truth graphs are DAGs.  
	We next compare AGIP and DAGIP on graphs that are not "DAG-representable", i.e., the ground truth AG is not  Markov-equivalent to any DAG. We randomly generate AGs with $(d,l)=(10,10)$ following the scheme described in Section \ref{exp:medium}. We pick the first 5 AGs that have at least one bidirected edge in the corresponding PAGs (which implies they are not DAG-representable). All of these 5 AGs are provided in the supplement. For each of the 5 AGs, we generate 10 instances of 10000 realizations of the model, each with a different randomly generated conditional linear Gaussian parametrization. In addition to the original candidate c-components, we consider in AGIP c-components that are implied by three-node districts with up to 1 parent for each node. Table \ref{tab:non-DAG} contains SHD, precision, and recall values of DAGIP and AGIP solutions.
	
	AGIP has a strictly better score than DAGIP in 28 out of the 50 instances we considered. Note that if the scores for optimal AGIP and DAGIP solutions are identical (as in 22 instances), then it is not possible to predict which solution will perform better on metrics such as SHD, precision, or recall. AGIP performs better than DAGIP in precision and recall on the 1st, 2nd, 3rd, and 5th AGs, and performs better in SHD on the 2nd, 3rd, and 5th AGs. AGIP performs slightly worse than DAGIP on the 4th AG, which has a very large c-component containing 7 nodes in the district. All of the five AGs contain at least one c-component that is not covered by AGIP (nor by DAGIP). But considering more c-components does help improve the solution quality on the average, which illustrates the advantage of AGIP.
	
	\section{Conclusions and Future Work}
	We presented an integer-programming based approach for learning ancestral ADMGs from observational data. Our main contributions are:  1) an IP formulation for the ancestral ADMG learning problem; 2) new classes of valid inequalities for efficient solution of the IP model; 3) numerical experiments to compare our model with existing methods. Empirical evaluation shows that our method has promising performance and can generate solutions with better accuracy than existing state-of-the-art learning methods.
	% \begin{enumerate}
	%     \item We propose an IP formulation for the ancestral ADMG learning problem.
	%     \item We derive new classes of valid inequalities for efficient solution of the IP model.
	%     \item We conduct numerical experiments to compare our model with existing methods. Empirical evaluation shows that our method has promising performance and can generate solutions with better accuracy.
	% \end{enumerate}
	To our knowledge, this is the first exact score-based method for solving such problems in the presence of latent variables. For future work,  extending the current approach to allow efficient solution with more c-components could further improve the solution quality of the proposed method. Adding other classes of valid inequalities to strengthen the current IP formulation is another direction worth exploring.
	
	\section*{Acknowledgements}
	We thank Dr. James Luedtke for discussion on the problem formulation.
	
	\bibliography{ref}
	\bibliographystyle{ieeetr}
	
	\section*{Supplementary Material}
	\section*{Heuristics for Separation at Fractional Solutions}
	
	The starting point of our heuristic is Karger's random contraction algorithm for finding near-optimal min-cuts in edge-weighted undirected graphs (with nonnegative weights). Given a weighted graph with $n$ nodes and optimal min-cut value $t$ and a positive integer $\alpha \geq 1$, Karger's algorithms runs in time bounded by a polynomial function of $n^\alpha$ and returns all cuts in the graph with weight $ \leq \alpha t$. A weighted edge is chosen at random (with probability proportional to the weight of the edge), and the edge is contracted. When an edge $ij$ is contracted where $i$ and $j$ are (pseudo-)nodes, let $i'$ be a new pseudo-node representing $\{i,j\}$. Edges of the form $ki$ or $kj$ in the graph are removed and an edge $ki'$ with weight $w_{ki'} = w_{ki} + w_{kj}$ is added, where $w_{ki}$ is the weight of the edge $ki$ in the graph before contraction and 0 if no such edge exists. This contraction procedure is repeated till there are $2\alpha$ pseudo-nodes left, and the min-cut value in the resulting graph is returned. The central idea of the algorithm is that high weight edges are contracted resulting in the end-nodes of such edges being put in the same 'side' of the final cut.
	
	We adapt the above idea. We first discuss how to find violated strengthened cluster inequalities. Consider a subset $S \subseteq V$ and a solution vector $\bar z$ of the LP relaxation.  Let $\mu(S)$ equal the left-hand side of inequality where each $z$ variable is set to the corresponding value in $\bar z$. If we find a subset $S \subset V$ such that $\mu(S) < 1$, then we have found a cluster inequality violated by the point $\bar z$. However, as there are exponentially many choices of the set $S$, it is not realistic to enumerate each $S$ and compute $\mu(S)$. Instead, we initially only consider the sets $S = \{i\}$ consisting of individual nodes and note that $\mu(\{i\}) = 1$ for each node $i$ because of equation $\sum_{C:i\in D_C}z_{C}=1$. 
	Let $H_0$ be the undirected weighted graph with the same set of nodes as $G$.
	We iteratively select and contract ``high weight" edges and create pseudonodes (that consist of the union of nodes associated with the two pseudonodes incident to the edge), leading to a sequence of graphs $H_0, H_1, \ldots$, where each graph has one less pseudonode than the previous one. At the $k$th iteration we ensure that for each pseudonode $i \in H_k$, we have $\mu^k(\{i\})$ equal to the value of $\mu(S)$ where $S$ is the set of nodes in $H_0$ that correspond to the pseudonode $i$ of $H_k$.
	
	Let the weight of an edge $ij$ in $H_0$ be calculated as follows. Define \begin{multline*}
	w_{ij} := \sum_{W:j\in W}\sum_{C\in\mathcal{C}:i\in D_C,W_{C,i}=W}\bar{z}_C+
	\sum_{W:i\in W}\sum_{C\in\mathcal{C}:j\in D_C,W_{C,j}=W}\bar{z}_C+
	\sum_{C\in\mathcal{C}:\{i,j\}\subseteq D_C,i\notin W_{C,j},j\notin W_{C,i}}\bar{z}_C.
	\end{multline*}
	Note that the following relationship holds:
	\begin{equation}\label{wt-rel} \mu(\{i,j\}) = \mu(\{i\}) + \mu(\{j\}) - w_{ij}. \end{equation}
	Step 1: If we apply the random contraction step in Karger's algorithm to the weighted graph $H_0$ to obtain $H_1$, then with high probability we will contract an edge $ij$ with a high value of $w_{ij}$. This step leads to an $ij$ such that $\mu(\{i,j\})$ is approximately minimized (as $\mu(\{i\}) = \mu(\{j\}) = 1$ for all nodes $i,j$ of $H_0$).\\
	Step 2: We then create a pseudo-node $\{i,j\}$ in $H_1$ (labeled, say, by node $i$ if $i < j$ and by $j$ otherwise). Assuming the new psuedonode in $H_1$ has label $i$, We let $\mu^1(\{i\}) = \mu(\{i,j\})$ and $\mu^1(\{k\}) = \mu(\{k\})$ for all other nodes.\\ 
	Step 3: We then recalculate $w_{ij}$ values for edges in $H_1$ in such a fashion that for every pair of pseudonodes in $H_1$, the relationship in (\ref{wt-rel}) holds.
	To do this, we first remove all c-component variables $\bar z_C$ where $i \in D_C$ and $j \in W_{C,i}$ or $j \in D_C$ and $i \in W_{C,j}$. Next we replace all occurrences of $j$ by $i$ in the remaining variables, and then recompute the weights $w_{kl}$ for edges $kl$. 
	
	If we repeat Steps 1-3 for $H_1$ to obtain $H_2, H_3, \ldots$, then it is not hard to see that we always maintain the property in (\ref{wt-rel}) with $\mu$ replaced by $\mu^k$, and also the property that for any node $i$ in $H_k$, the value $\mu^k(\{i\})$ is equal to $\mu(S)$ where $S$ is the set associated with the pseudonode $i$.
	We stop whenever we find a pseudonode $i$ in $H_k$ (and associated $S$) such that $\mu^k(\{i\}) = \mu(S) < 1$.
	We repeat this algorithm multiple times while starting from different random seeds. Though this algorithm is not guaranteed to find a set $S$ such that $\mu(S) < 1$, it works well in practice, and does not return spurious sets $S$.
	
	To adapt the above algorithm to find violated strengthened bicluster inequalities, we proceed as follows. Consider a specific bidirected edge $ij$ such that $\bar I(i\leftrightarrow j) > 0$ for a given fractional solution $\bar z$.
	We first contract $ij$ in a special manner to obtain a graph $H_0$. Assume $i'$ represents the resulting pseudonode: for any c-component $C$ such that $i,j \in D_C$, we let $W_{C,i}$ and $W_{C,j}$ be replaced by a single parent set $W' = W_{C,i} \cup W_{C,j}$ of the new pseudonode $i'$. We also remove all c-component variables $z_C$ such that $D_C \cap \{i,j\} = 1$. We subsequently define $\mu(\{k\})$ values for nodes $k$ in $H_0$, edge weights $w_{kl}$, perform a random contraction step and repeat this process till we find a pseudonode $i$ in $H_k$ such that $\mu^k(\{i\}) < I(i \leftrightarrow j)$. We ensure that $\mu^k(\{i\})$ always represents the left-hand side of strengthened bicluster inequalities.
	
	\section*{Performance of different methods when the number of latent variables increases}
	We present in the following table the precise numbers (means of SHD, precision and recall) of the results in Figure 6 of the main paper.
	\begin{table}[htb!]
		\centering
		\caption{Exact numbers for Figure 6}
		\begin{tabular}{ @{\extracolsep{\fill}} ccccccccccccc}
			\toprule
			$l$ & \multicolumn{4}{c}{SHD} & \multicolumn{4}{c}{Precision (\%)} & \multicolumn{4}{c}{Recall (\%)}\\
			\midrule\midrule\addlinespace
			& AGIP & M$^3$HC & FCI & cFCI & AGIP & M$^3$HC & FCI & cFCI & AGIP & M$^3$HC & FCI & cFCI\\
			\cmidrule(lr){2-5}\cmidrule(lr){6-9}\cmidrule(lr){10-13}
			$2$ & \textbf{12.6} & 26.4 & 30.8 & 24.8 &  \textbf{80.3} & 54.1 & 29.6 & 50.0 & \textbf{78.7} & 49.3 & 26.2 & 44.1\\
			$4$ & \textbf{22.4} & 27.3 & 31.6 & 24.4 &  \textbf{63.0} & 57.7 & 36.9 & 52.6 & \textbf{63.1} & 52.7 & 34.9 & 48.9\\
			$6$ & \textbf{28.8} & 32.8 & 35.9 & 31.3 & \textbf{57.8} & 49.9 & 33.6 & 46.0 & \textbf{53.1} & 44.5 & 29.7 & 39.8\\
			\bottomrule
		\end{tabular}
		\label{tab:constraint}
		\vskip 0.1in
	\end{table}
	
	\section*{Ground Truth AGs for Experiments in Section 4.3 of the main paper}
	\begin{figure}[H]
		\centering
		\begin{subfigure}{0.3\textwidth}
			\begin{tikzpicture}
			\coordinate (Node1) at (0,-0);
			\coordinate (Node2) at (2*0.588,-2*0.191);
			\coordinate (Node3) at (2*0.951,-2*0.691);
			\coordinate (Node4) at (2*0.951,-2*1.309);
			\coordinate (Node5) at (2*0.588,-2*1.809);
			\coordinate (Node6) at (0,-2*2);
			\coordinate (Node7) at (-2*0.588,-2*1.809);
			\coordinate (Node8) at (-2*0.951,-2*1.309);
			\coordinate (Node9) at (-2*0.951,-2*0.691);
			\coordinate (Node10) at (-2*0.588,-2*0.191);
			\node[scale=1.5] at (Node1) {\textbf{1}};
			\node[scale=1.5] at (Node2) {\textbf{2}};
			\node[scale=1.5] at (Node3) {\textbf{3}};
			\node[scale=1.5] at (Node4) {\textbf{4}};
			\node[scale=1.5] at (Node5) {\textbf{5}};
			\node[scale=1.5] at (Node6) {\textbf{6}};
			\node[scale=1.5] at (Node7) {\textbf{7}};
			\node[scale=1.5] at (Node8) {\textbf{8}};
			\node[scale=1.5] at (Node9) {\textbf{9}};
			\node[scale=1.5] at (Node10) {\textbf{10}};
			\draw[<->,ultra thick,shorten <= 10pt,shorten >= 10pt] (Node7) -- (Node8);
			\draw[->,ultra thick,shorten <= 8pt,shorten >= 10pt] (Node1) -- (Node4);
			\draw[->,ultra thick,shorten <= 10pt,shorten >= 10pt] (Node2) -- (Node4);
			\draw[->,ultra thick,shorten <= 10pt,shorten >= 10pt] (Node2) -- (Node7);
			\draw[->,ultra thick,shorten <= 6pt,shorten >= 10pt] (Node3) -- (Node8);
			\draw[->,ultra thick,shorten <= 10pt,shorten >= 10pt] (Node10) -- (Node2);
			\draw[->,ultra thick,shorten <= 10pt,shorten >= 10pt] (Node10) -- (Node4);
			\draw[->,ultra thick,shorten <= 10pt,shorten >= 10pt] (Node10) -- (Node7);
			\draw[->,ultra thick,shorten <= 10pt,shorten >= 10pt] (Node10) -- (Node9);
			\end{tikzpicture}
			\caption*{AG \#1}
		\end{subfigure}
		\begin{subfigure}{0.3\textwidth}
			\begin{tikzpicture}
			\coordinate (Node1) at (0,-0);
			\coordinate (Node2) at (2*0.588,-2*0.191);
			\coordinate (Node3) at (2*0.951,-2*0.691);
			\coordinate (Node4) at (2*0.951,-2*1.309);
			\coordinate (Node5) at (2*0.588,-2*1.809);
			\coordinate (Node6) at (0,-2*2);
			\coordinate (Node7) at (-2*0.588,-2*1.809);
			\coordinate (Node8) at (-2*0.951,-2*1.309);
			\coordinate (Node9) at (-2*0.951,-2*0.691);
			\coordinate (Node10) at (-2*0.588,-2*0.191);
			\node[scale=1.5] at (Node1) {\textbf{1}};
			\node[scale=1.5] at (Node2) {\textbf{2}};
			\node[scale=1.5] at (Node3) {\textbf{3}};
			\node[scale=1.5] at (Node4) {\textbf{4}};
			\node[scale=1.5] at (Node5) {\textbf{5}};
			\node[scale=1.5] at (Node6) {\textbf{6}};
			\node[scale=1.5] at (Node7) {\textbf{7}};
			\node[scale=1.5] at (Node8) {\textbf{8}};
			\node[scale=1.5] at (Node9) {\textbf{9}};
			\node[scale=1.5] at (Node10) {\textbf{10}};
			\draw[<->,ultra thick,shorten <= 10pt,shorten >= 10pt] (Node4) -- (Node6);
			\draw[->,ultra thick,shorten <= 10pt,shorten >= 10pt] (Node1) -- (Node2);
			\draw[->,ultra thick,shorten <= 10pt,shorten >= 10pt] (Node1) -- (Node3);
			\draw[->,ultra thick,shorten <= 10pt,shorten >= 10pt] (Node1) -- (Node8);
			\draw[->,ultra thick,shorten <= 10pt,shorten >= 10pt] (Node1) -- (Node10);
			\draw[->,ultra thick,shorten <= 8pt,shorten >= 10pt] (Node2) -- (Node6);
			\draw[->,ultra thick,shorten <= 10pt,shorten >= 10pt] (Node7) -- (Node6);
			\draw[->,ultra thick,shorten <= 10pt,shorten >= 6pt] (Node8) -- (Node2);
			\draw[->,ultra thick,shorten <= 10pt,shorten >= 10pt] (Node8) -- (Node3);
			\draw[->,ultra thick,shorten <= 10pt,shorten >= 10pt] (Node8) -- (Node4);
			\end{tikzpicture}
			\caption*{AG \#2}
		\end{subfigure}
		\begin{subfigure}{0.3\textwidth}
			\begin{tikzpicture}
			\coordinate (Node1) at (0,-0);
			\coordinate (Node2) at (2*0.588,-2*0.191);
			\coordinate (Node3) at (2*0.951,-2*0.691);
			\coordinate (Node4) at (2*0.951,-2*1.309);
			\coordinate (Node5) at (2*0.588,-2*1.809);
			\coordinate (Node6) at (0,-2*2);
			\coordinate (Node7) at (-2*0.588,-2*1.809);
			\coordinate (Node8) at (-2*0.951,-2*1.309);
			\coordinate (Node9) at (-2*0.951,-2*0.691);
			\coordinate (Node10) at (-2*0.588,-2*0.191);
			\node[scale=1.5] at (Node1) {\textbf{1}};
			\node[scale=1.5] at (Node2) {\textbf{2}};
			\node[scale=1.5] at (Node3) {\textbf{3}};
			\node[scale=1.5] at (Node4) {\textbf{4}};
			\node[scale=1.5] at (Node5) {\textbf{5}};
			\node[scale=1.5] at (Node6) {\textbf{6}};
			\node[scale=1.5] at (Node7) {\textbf{7}};
			\node[scale=1.5] at (Node8) {\textbf{8}};
			\node[scale=1.5] at (Node9) {\textbf{9}};
			\node[scale=1.5] at (Node10) {\textbf{10}};
			\draw[<->,ultra thick,shorten <= 10pt,shorten >= 10pt] (Node2) -- (Node3);
			\draw[<->,ultra thick,shorten <= 10pt,shorten >= 10pt] (Node2) -- (Node10);
			\draw[<->,ultra thick,shorten <= 7pt,shorten >= 10pt] (Node3) -- (Node10);
			\draw[<->,ultra thick,shorten <= 10pt,shorten >= 10pt] (Node9) -- (Node10);
			\draw[->,ultra thick,shorten <= 10pt,shorten >= 10pt] (Node2) -- (Node4);
			\draw[->,ultra thick,shorten <= 10pt,shorten >= 10pt] (Node2) -- (Node9);
			\draw[->,ultra thick,shorten <= 10pt,shorten >= 10pt] (Node5) -- (Node6);
			\draw[->,ultra thick,shorten <= 10pt,shorten >= 10pt] (Node5) -- (Node9);
			\draw[->,ultra thick,shorten <= 10pt,shorten >= 10pt] (Node5) -- (Node10);
			\draw[->,ultra thick,shorten <= 10pt,shorten >= 6pt] (Node8) -- (Node3);
			\draw[->,ultra thick,shorten <= 10pt,shorten >= 10pt] (Node8) -- (Node6);
			\draw[->,ultra thick,shorten <= 10pt,shorten >= 10pt] (Node9) -- (Node4);
			\draw[->,ultra thick,shorten <= 10pt,shorten >= 10pt] (Node9) -- (Node6);
			\end{tikzpicture}
			\caption*{AG \#3}
		\end{subfigure}
	\end{figure}
	\begin{figure}[H]
		\centering
		\begin{subfigure}{0.3\textwidth}
			\begin{tikzpicture}
			\coordinate (Node1) at (0,-0);
			\coordinate (Node2) at (2*0.588,-2*0.191);
			\coordinate (Node3) at (2*0.951,-2*0.691);
			\coordinate (Node4) at (2*0.951,-2*1.309);
			\coordinate (Node5) at (2*0.588,-2*1.809);
			\coordinate (Node6) at (0,-2*2);
			\coordinate (Node7) at (-2*0.588,-2*1.809);
			\coordinate (Node8) at (-2*0.951,-2*1.309);
			\coordinate (Node9) at (-2*0.951,-2*0.691);
			\coordinate (Node10) at (-2*0.588,-2*0.191);
			\node[scale=1.5] at (Node1) {\textbf{1}};
			\node[scale=1.5] at (Node2) {\textbf{2}};
			\node[scale=1.5] at (Node3) {\textbf{3}};
			\node[scale=1.5] at (Node4) {\textbf{4}};
			\node[scale=1.5] at (Node5) {\textbf{5}};
			\node[scale=1.5] at (Node6) {\textbf{6}};
			\node[scale=1.5] at (Node7) {\textbf{7}};
			\node[scale=1.5] at (Node8) {\textbf{8}};
			\node[scale=1.5] at (Node9) {\textbf{9}};
			\node[scale=1.5] at (Node10) {\textbf{10}};
			\draw[<->,ultra thick,shorten <= 10pt,shorten >= 10pt] (Node1) -- (Node3);
			\draw[<->,ultra thick,shorten <= 10pt,shorten >= 10pt] (Node1) -- (Node4);
			\draw[<->,ultra thick,shorten <= 10pt,shorten >= 10pt] (Node1) -- (Node8);
			\draw[<->,ultra thick,shorten <= 10pt,shorten >= 10pt] (Node1) -- (Node9);
			\draw[<->,ultra thick,shorten <= 10pt,shorten >= 10pt] (Node1) -- (Node10);
			\draw[<->,ultra thick,shorten <= 10pt,shorten >= 10pt] (Node3) -- (Node4);
			\draw[<->,ultra thick,shorten <= 10pt,shorten >= 10pt] (Node3) -- (Node6);
			\draw[<->,ultra thick,shorten <= 10pt,shorten >= 10pt] (Node3) -- (Node8);
			\draw[<->,ultra thick,shorten <= 10pt,shorten >= 6pt] (Node3) -- (Node9);
			\draw[<->,ultra thick,shorten <= 10pt,shorten >= 10pt] (Node4) -- (Node6);
			\draw[<->,ultra thick,shorten <= 10pt,shorten >= 10pt] (Node6) -- (Node8);
			\draw[<->,ultra thick,shorten <= 10pt,shorten >= 6pt] (Node6) -- (Node9);
			\draw[<->,ultra thick,shorten <= 10pt,shorten >= 10pt] (Node8) -- (Node9);
			\draw[<->,ultra thick,shorten <= 10pt,shorten >= 10pt] (Node8) -- (Node10);
			\draw[->,ultra thick,shorten <= 10pt,shorten >= 10pt] (Node1) -- (Node6);
			\draw[->,ultra thick,shorten <= 10pt,shorten >= 10pt] (Node5) -- (Node3);
			\draw[->,ultra thick,shorten <= 10pt,shorten >= 10pt] (Node5) -- (Node4);
			\draw[->,ultra thick,shorten <= 8pt,shorten >= 6pt] (Node7) -- (Node2);
			\draw[->,ultra thick,shorten <= 10pt,shorten >= 10pt] (Node10) -- (Node3);
			\draw[->,ultra thick,shorten <= 10pt,shorten >= 10pt] (Node10) -- (Node6);
			\draw[->,ultra thick,shorten <= 10pt,shorten >= 10pt] (Node10) -- (Node9);
			\end{tikzpicture}
			\caption*{AG \#4}
		\end{subfigure}
		\begin{subfigure}{0.3\textwidth}
			\begin{tikzpicture}
			\coordinate (Node1) at (0,-0);
			\coordinate (Node2) at (2*0.588,-2*0.191);
			\coordinate (Node3) at (2*0.951,-2*0.691);
			\coordinate (Node4) at (2*0.951,-2*1.309);
			\coordinate (Node5) at (2*0.588,-2*1.809);
			\coordinate (Node6) at (0,-2*2);
			\coordinate (Node7) at (-2*0.588,-2*1.809);
			\coordinate (Node8) at (-2*0.951,-2*1.309);
			\coordinate (Node9) at (-2*0.951,-2*0.691);
			\coordinate (Node10) at (-2*0.588,-2*0.191);
			\node[scale=1.5] at (Node1) {\textbf{1}};
			\node[scale=1.5] at (Node2) {\textbf{2}};
			\node[scale=1.5] at (Node3) {\textbf{3}};
			\node[scale=1.5] at (Node4) {\textbf{4}};
			\node[scale=1.5] at (Node5) {\textbf{5}};
			\node[scale=1.5] at (Node6) {\textbf{6}};
			\node[scale=1.5] at (Node7) {\textbf{7}};
			\node[scale=1.5] at (Node8) {\textbf{8}};
			\node[scale=1.5] at (Node9) {\textbf{9}};
			\node[scale=1.5] at (Node10) {\textbf{10}};
			\draw[<->,ultra thick,shorten <= 10pt,shorten >= 10pt] (Node1) -- (Node7);
			\draw[<->,ultra thick,shorten <= 10pt,shorten >= 10pt] (Node1) -- (Node9);
			\draw[<->,ultra thick,shorten <= 10pt,shorten >= 10pt] (Node1) -- (Node10);
			\draw[<->,ultra thick,shorten <= 10pt,shorten >= 10pt] (Node7) -- (Node9);
			\draw[<->,ultra thick,shorten <= 10pt,shorten >= 10pt] (Node9) -- (Node10);
			\draw[->,ultra thick,shorten <= 8pt,shorten >= 10pt] (Node1) -- (Node5);
			\draw[->,ultra thick,shorten <= 10pt,shorten >= 10pt] (Node2) -- (Node1);
			\draw[->,ultra thick,shorten <= 10pt,shorten >= 10pt] (Node2) -- (Node4);
			\draw[->,ultra thick,shorten <= 10pt,shorten >= 10pt] (Node2) -- (Node7);
			\draw[->,ultra thick,shorten <= 10pt,shorten >= 10pt] (Node2) -- (Node9);
			\draw[->,ultra thick,shorten <= 10pt,shorten >= 10pt] (Node2) -- (Node10);
			\draw[->,ultra thick,shorten <= 10pt,shorten >= 10pt] (Node4) -- (Node1);
			\draw[->,ultra thick,shorten <= 10pt,shorten >= 8pt] (Node7) -- (Node10);
			\draw[->,ultra thick,shorten <= 6pt,shorten >= 6pt] (Node8) -- (Node3);
			\draw[->,ultra thick,shorten <= 10pt,shorten >= 10pt] (Node8) -- (Node9);
			\end{tikzpicture}
			\caption*{AG \#5}
		\end{subfigure}
	\end{figure}
	
\end{document}